\newtheorem{definition}{Definition}[section]
\newtheorem*{remark}{Remark}
\newtheorem{theorem}{Theorem}[section]
\newtheorem{lemma}[theorem]{Lemma}
\title{SPPD: Self-training with Process Preference Learning Using Dynamic Value Margin}
\author{
  Hao Yi$^{1,2}$, 
 Qingyang Li$^{1}$\thanks{Corresponding author.},
 Yulan Hu$^{1,2}$,
 Fuzheng Zhang$^{1}$ ,
 Di Zhang$^{1}$,
 Yong Liu$^{2}$ \\
  $^1$ Kuaishou Technology, Beijing, China \\ 
  $^2$ Renmin University of China, Gaoling School of Artificial Intelligence, Beijing 
}
\begin{document}
\maketitle
\begin{abstract}
Recently, enhancing the numerical and logical reasoning capability of Large Language Models (LLMs) has emerged as a research hotspot. Existing methods face several limitations: inference-phase techniques (e.g., Chain of Thoughts) rely on prompt selection and the pretrained knowledge; sentence-level Supervised Fine-Tuning (SFT) and Direct Preference Optimization (DPO) struggle with step-wise mathematical correctness and depend on stronger models distillation or human annotations; while Reinforcement Learning (RL) approaches incur high GPU memory costs and unstable training. To address these, we propose \textbf{S}elf-training framework integrating  \textbf{P}rocess  \textbf{P}reference learning using \textbf{D}ynamic value margin (SPPD). SPPD leverages a process-based Markov Decision Process (MDP) and Bellman optimality equation to derive \textbf{dynamic value margin} on step-level preference optimization, which employs tree-based self-sampling on model responses \textbf{without any distillation} from other models. Furthermore, we theoretically prove that SPPD is \textbf{equivalent to on-policy policy gradient methods} under reward constraints. Experiments on 7B-scale models demonstrate superior performance across in-domain and out-domain mathematical benchmarks. We open-source our code at \href{https://anonymous.4open.science/r/SSDPO-D-DCDD}{https://anonymous.4open.science/r/SPPD-DCDD}.

\end{abstract}
\section{Introduction}
Recently, the O-series models \citep{o1} have achieved a significant leap in the mathematical reasoning capabilities of LLMs. Consequently, enhancing the numerical and logical reasoning capability of LLMs has emerged as a research hotspot \citep{self-debug,metamath,swe-bench,deepseek-math,tpo,step-dpo,deepseek-r1}. 

From now on, there are lots of methods to promote the model reasoning capability. During the \textbf{inference} phase, the most common and effective approach is to employ Chain of Thoughts (CoT) prompts, which can stimulate the model's inherent reasoning and thinking abilities \citep{cot}.  Similar methods include Tree of Thoughts (ToT) \citep{tot}, Best of N (BoN) \citep{qwen-prm,self-rewarding}, Monte Carlo Tree Search (MCTS) \citep{tsllm,rest-mcts}, and so on. However, these methods do not involve training policy models but rely on increasing computational volume during the inference phase, heavily depending on prompt selection and the pretrained knowledge embedded within the model. Moreover, \textbf{SFT} \citep{rest-mcts,tsllm} or \textbf{DPO} \citep{dpo,secret-q} based on human annotations or feedback from more advanced AI also serves as an effective way to enhance the model's reasoning capabilities. These methods leverage human-curated selections or stronger open-source and close-source models  to inject good reasoning paradigms, such as long-thought processes and reflection, into the model being trained. However, all these methods are at the sentence level, which does not align well with the requirement for correctness at every step in mathematical reasoning scenarios. Meanwhile, such methods are either constrained by time-consuming manual selection processes or require support from more powerful models, like STILL-2 \citep{slow-thinking} and Skywork-o1-open 
\citep{skywork-o1}. When the model to be trained is already the strongest reasoning model available, how can we further improve the model’s reasoning performance without any distillation? 
While \textbf{RL-based} methods like Proximal Policy Optimization (PPO) \citep{ppo}, Group Relative Policy Optimization (GRPO) \citep{deepseek-math,deepseek-r1}, Reinforcement Fine-Tuning (RFT) \citep{reft}, etc., can address the aforementioned issues. However, these methods are online approaches involving numerous time-consuming inference operations during training, requiring loading and training multiple models, imposing high demands on GPU memory and leading to highly unstable training processes. 

To solve above issues, we propose \textbf{S}elf-training with \textbf{P}rocess \textbf{P}reference learning using \textbf{D}ynamic value margin (SPPD). Unlike sentence-level SFT and DPO, we completely abandon the data distillation approach and propose optimizing at the step level by integrating dynamic value margin. Specifically, SPPD utilizes a process-based MDP and a process-based Bradley-Terry (BT) Model \citep{bt-model}. By leveraging the Bellman optimality equation \citep{bellman-equation} and the online RL objective modeled with MDP \citep{secret-q}, SPPD derives step-wise direct preference optimization using \textbf{dynamic value margin}. Additionally, SPPD \textbf{does not rely on any stronger models for data distillation}. Instead, it employs a tree search approach, which utilizes step-level trajectory sampling solely on the model's own response and logits score. To ensure smoother and more effective training of SPPD, we introduce an SFT and DPO strategy based on PRM rejection sampling, progressively enhancing the model's reasoning capabilities from coarse-grained sentence-level optimization to fine-grained step-level refinement. Finally, we theoretically prove that under specific reward constraints, our method \textbf{is equivalent to on-policy policy gradient method}.

The experimental results demonstrate that SPPD achieves widespread and significant improvements across different model architectures of 7B size and various in-domain and out-domain mathematical test datasets. It surpasses most existing open-source models of the same size and some closed-source models, demonstrating the effectiveness and robustness of SPPD. Our contribution are summarized as follows: 1) We utilize the Bellman optimality equation and the online RL objective modeled with MDP to achieve SPPD and iteratively improve the reasoning capability. 2)  We design a step-level tree self-sampling scheme without any distillation from stronger model.  3) We theoretically prove that our method is equivalent to on-policy policy gradient optimization.

\section{Related Work}
\textbf{Enhance Reasoning Capability of LLMs.}
Recently, a substantial body of research focuses on enhancing the reasoning capabilities of LLMs. These methodologies are primarily divided into two categories: the inference phase and the Post-Training phase. During the inference phase, early studies concentrate on stimulating the model's inherent reasoning abilities by modifying prompts \citep{cot,tot}. Subsequent research leverages the consistency of multiple inferences by the model \citep{self-rewarding,self-consistency} or integrates tree search strategies \citep{tsllm,rest-mcts} to guide the model towards more accurate decoding processes. However, these approaches do not involve training and heavily rely on the model's intrinsic reasoning capabilities. In the Post-Training phase, SFT \cite{tsllm} and DPO \cite{dpo,secret-q} emerge as primary enhancement techniques. These methods depend on human-curated selection of high-quality reasoning trajectories or distillation of responses from stronger models \cite{slow-thinking} to improve the reasoning performance of smaller or weaker models. Nevertheless, these approaches are time-consuming and unsustainable. RL paradigms, exemplified by PPO \citep{ppo}, GRPO \citep{deepseek-r1,deepseek-math}, and ReFT \citep{reft}, effectively address the aforementioned issues but introduce significant GPU memory consumption and training instability challenges. 
\\\textbf{Step-Level Direct Preference Optimization.}
In order to optimize and improve the model's reasoning capability from the step level, CPO \citep{cpo} aligns each step of the CoT reasoning paths with those of ToT using the inherent preference information in the tree-search process, but it control LLMs to generate the thought data by prompt, which may influent the model generation quality. Step-DPO \citep{step-dpo} treats individual reasoning steps as units for preference optimization. However, it utilizes the GPT4 to evaluate the correctness of step, which could bring introduced bias and is expensive.  TPO \citep{tpo} claims that the policy can potentially learn more effectively from a ranked preference list of responses given the prompt and utilizes adaptive step reward to adjust the reward values of each step in the trajectory.  However, it introduce a stronger form of ``catastrophic forgetting'' and imbalanced distribution of the preference tree reward values.

\section{Preliminaries}\label{sec:step-dpo-mdp}
In this section, we first define the step-level MDP in  natural language process. Subsequently, based on the step-level MDP, we modify the original RLHF objective and provide the optimal (fixed-point) solution to maximum casual entropy problem.

\textbf{Step-Level MDP in LLMs.}
We describe the step-level MDP in natural language process. The step-level MDP is defined as the following quintuple: $\mathcal{M}=(\mathcal{A},\mathcal{S},f,r,\rho_0)$, where $\mathcal{A}$ represents the set of action spaces, consisting of a reasoning step $a_t$; $\mathcal{S}$ represents the set of states, which in natural language denotes the sequence of the problem and the current reasoning step $s_t=s_0|a_1|a_2|...|a_t$, where | denotes the string concatenation operation and $s_0$ is the problem. It is noteworthy that the selection of $a_t$ depends on the current state. $f:\mathcal{S}\times \mathcal{A}\rightarrow\mathcal{S}$ represents the state transition function, indicating the transition from the current state to the next state after performing a certain action. Specifically, $f(s,a) = s|a$.  $r:\mathcal{S}\times\mathcal{A}\rightarrow\mathbb{R}$ is the reward function, representing the immediate reward obtained after performing a certain action in the current state. $\rho_0$ represents the distribution of the problems.

\textbf{RLHF objective with the Step-Level MDP.} In the original RLHF objective \citep{rlhf-objective}, the rewards obtained from trajectories are modeled as a bandit problem \citep{rlhf_bandit}. However, such sparse rewards are not suitable for policy learning in models, especially in mathematical reasoning tasks \citep{reward_sparse_1,reward_sparse_2}. Based on the step-level MDP, we modify the RLHF objective as follows \citep{secret-q}:
\begin{align}
\max_{\pi_\theta} \mathbb{E}_{a_t \sim \pi_\theta(\cdot \mid \mathbf{s}_t)} [ &\sum_{t=0}^T (r(\mathbf{s}_t, \mathbf{a}_t) + \underbrace{\beta \log \pi_{\text{ref}}(\mathbf{a}_t \mid \mathbf{s}_t)}_{\text{KL penalty}} ) \nonumber \\
&+\beta \mathcal{H}(\pi_\theta) \mid \mathbf{s}_0 \sim \rho(\mathbf{s}_0) ],
\label{equ:rlhf-step-mdp}
\end{align}
where   $\pi_\theta$ represents the large language policy model with learnable parameters, $\pi_{ref}$ represents reference model and $\beta$ is used to control the policy model not to deviate too far from the reference model, \(\mathcal{H}(\pi_\theta)\) is the entropy of  $\pi_\theta$. This optimization problem is known as the \textbf{Maximum Causal Entropy}. \citet{fix_solutions} have proven that Equation (\ref{equ:rlhf-step-mdp}) has a fixed-point solution \(\pi^*\), defined as follows:
\begin{align}
    \pi^*(a_t \mid s_t) = \pi_{\text{ref}}(a_t|s_t) e^{(Q^*(s_t, a_t) - V^*(s_t)) / \beta},
    \label{equ:fix_solution}
\end{align}
where \(V^*(s_t)\) represents the partition function of the \(\pi^*\) distribution, used to normalize the probability distribution, and \(Q^*(s_t, a_t)\) denotes the expected sum of future immediate rewards starting from the state-action pair \((s_t, a_t)\) under the policy \(\pi^*\).
\section{Method}
\begin{figure}
    \centering
    \includegraphics[width=1.0\linewidth]{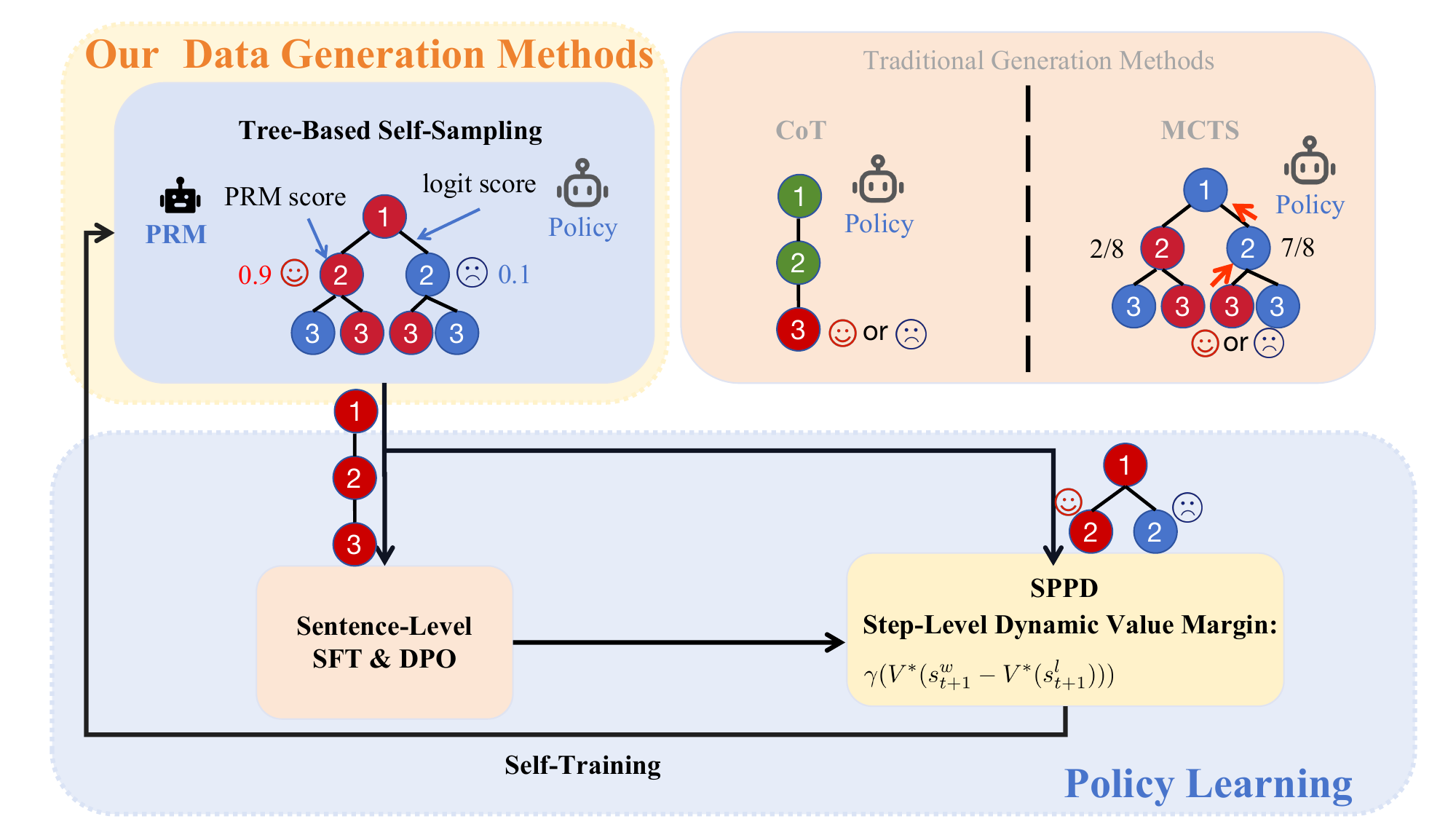}
    \caption{The framework of SPPD: unlike CoT and MCTS, Tree-Based Self-Sampling generates step trajectories with common prefixes and significantly preserves the output distribution of the policy. The former provides step preference signals for SPPD, while the latter theoretically ensures consistency with on-policy gradient methods, thereby enabling self-enhancement of the model's reasoning capabilities.}
    \label{fig:framework}
\end{figure}
In this section, we first propose a process preference learning scheme using dynamic value margins based on the step MDP and BT-model, and then refine this preference learning scheme using the reward equivalence. Additionally, we introduce a tree-based self-sampling method designed to generate step trajectories with common prefix. Finally, we introduce sentence-level SFT and DPO using PRM, aiming to make the model training smoother and more effective. 
\subsection{Process Preference Learning with Dynamic Value Margin}\label{sec:step-dpo}\label{sec:method-prm-step-dpo}
First, we derive the process preference learning with dynamic value margin starting from the optimal Bellman equation and revisit the traditional step DPO \citep{step-dpo} from a different perspective. 

\begin{lemma}[Optimal Step Reward Function]\label{lemma:reward-function}
    Under the step MDP definition in Section \ref{sec:step-dpo-mdp} and fix solution for the maximum casual entropy problem (Equation (\ref{equ:fix_solution})), the optimal step reward function can be calculate as follow:
    \begin{align}
    r(s_t, a_t) = \underbrace{\beta \log \frac{\pi^*(a_t | s_t)}{\pi_{\text{ref}}(a_t | s_t)}}_{\text{Implicit Reward}} + \underbrace{V^*(s_{t+1}) - V^*(s_t)}_{\text{Value Gain}}.
    \label{equ:reward}
\end{align}
\end{lemma}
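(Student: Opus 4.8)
The plan is to invert the fixed-point policy in Equation (\ref{equ:fix_solution}) to express the implicit reward as an optimal advantage, and then to eliminate $Q^*$ using the soft Bellman optimality equation so that only the log policy ratio and two consecutive state values survive.

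First I would take logarithms on both sides of Equation (\ref{equ:fix_solution}). Dividing by $\pi_{\text{ref}}$, taking $\log$, and multiplying by $\beta$ yields
\[
\beta \log \frac{\pi^*(a_t \mid s_t)}{\pi_{\text{ref}}(a_t \mid s_t)} = Q^*(s_t, a_t) - V^*(s_t),
\]
so the implicit reward term is exactly the optimal soft advantage $Q^*(s_t,a_t) - V^*(s_t)$. This step uses nothing beyond the stated fixed point and the fact that $V^*(s_t)$ is the (state-dependent) log-partition normalizer of $\pi^*(\cdot \mid s_t)$.

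Second I would invoke the soft Bellman optimality equation for the step-level MDP. Since the transition $f(s_t,a_t)=s_{t+1}=s_t \mid a_t$ is deterministic and the $\beta\log\pi_{\text{ref}}$ term has been absorbed into the objective (Equation (\ref{equ:rlhf-step-mdp})) so that $Q^*$ measures only the future immediate rewards, the optimal $Q$ decomposes as $Q^*(s_t,a_t)=r(s_t,a_t)+V^*(s_{t+1})$, with the terminal convention $V^*(s_{T+1})=0$. Substituting $Q^*(s_t,a_t)=\beta\log(\pi^*/\pi_{\text{ref}})+V^*(s_t)$ from the first step and solving for $r(s_t,a_t)$ isolates the reward as the implicit-reward term plus a difference of consecutive state values, i.e. the claimed split into an implicit reward and a value-gain term.

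The main obstacle is establishing the soft Bellman relation in a form fully consistent with Equation (\ref{equ:fix_solution}): one must verify that, because $\pi_{\text{ref}}$ enters as a multiplicative base measure rather than inside the exponent, the $\beta\log\pi_{\text{ref}}$ contribution cancels cleanly and does not resurface in the recursion, and one must fix the direction of the telescoping value difference together with the terminal boundary condition. As a consistency check I would sum the per-step identity along a full trajectory; the value terms telescope to a single prompt-dependent constant $V^*(s_0)$, which is exactly the quantity that later cancels in the Bradley--Terry preference model and makes this decomposition usable for the subsequent step-level preference objective.
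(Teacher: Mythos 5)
Your proposal is correct and follows essentially the same route as the paper's own proof: log-linearize the fixed-point solution to obtain $Q^*(s_t,a_t)=\beta\log\frac{\pi^*(a_t\mid s_t)}{\pi_{\text{ref}}(a_t\mid s_t)}+V^*(s_t)$, invoke the Bellman optimality relation $Q^*(s_t,a_t)=r(s_t,a_t)+V^*(s_{t+1})$ for the deterministic step transition (with $V^*=0$ at terminal states), and eliminate $Q^*$. One caveat you share with the paper itself: combining these two identities strictly yields $r(s_t,a_t)=\beta\log\frac{\pi^*(a_t\mid s_t)}{\pi_{\text{ref}}(a_t\mid s_t)}+V^*(s_t)-V^*(s_{t+1})$, i.e.\ the value-difference term carries the opposite sign to the lemma's stated ``value gain'' $V^*(s_{t+1})-V^*(s_t)$, so the point you flag about ``fixing the direction of the telescoping value difference'' is precisely where both your argument and the paper's own write-up leave a sign unresolved rather than a difference in method.
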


Proof of Lemma (\ref{lemma:reward-function}) is shown in Appendix \ref{app:prove_for_reward_function}. Equation (\ref{equ:reward}) demonstrates that the immediate reward in the MDP consists of the model's \textbf{implicit reward} and the \textbf{value gain} of the optimal value function. Assuming we have the following step-level preference pairs \((s_t, a^w_{t+1}, a^l_{t+1})\), based on the step-level BT-model, we have the optimal preference distribution:
\begin{align*}
    p^*(a^w_{t+1} \succ a^l_{t+1}) = \sigma\left(r(s_t, a^w_{t+1}) - r(s_t, a^l_{t+1})\right).
\end{align*}
Here, \(\sigma(x) = 1 / (1 + e^{-x})\) is the sigmoid function.  Finally, we give the step DPO  loss using  dynamic value margin.
\begin{theorem}[Step DPO  Loss Using  Dynamic Value Margin.]\label{theorem:step-dpo-loss}
   If we aim to minimize the Kullback-Leibler(KL) divergence between the step-level preference distribution \(p_{\text{data}}\) in \(\mathcal{D}_{\text{step}}\) and the model's current preference distribution \(p_{\theta}\)  under the sampling of \(\pi_{ref}\), we can obtain the following loss function:
\begin{align}
    \mathcal{L}_{\text{step-dpo}} &=  -\mathbb{E}_{a^w_{t+1}, a^l_{t+1} \sim \pi_{\text{ref}}(\cdot | s_t)} [ \nonumber \\
   & \log \sigma (\beta h_{\theta}(a^w_{t+1}, a^l_{t+1}) \nonumber \\
    & - (V^*(s^w_{t+1}) - V^*(s^l_{t+1}))) ],
    \label{equ:step-dpo-loss}
\end{align}
where $h_{\theta}(a^w_{t+1}, a^l_{t+1}) =\log \frac{\pi_{\theta}(a_t^w | s_t)}{\pi_{\text{ref}}(a_t^w | s_t)} - \log \frac{\pi_{\theta}(a_t^l | s_t)}{\pi_{\text{ref}}(a_t^l | s_t)}$. 
\end{theorem}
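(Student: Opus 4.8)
The plan is to run a DPO-style derivation but keep the value-gain terms that \emph{do not} cancel, turning them into the dynamic margin. I would proceed in three stages: (i) express the reward difference of the two candidate steps using Lemma~\ref{lemma:reward-function}; (ii) substitute this difference into the step-level Bradley--Terry probability and replace the optimal policy $\pi^*$ by the parametrized policy $\pi_\theta$; (iii) recognize that minimizing the KL divergence between $p_{\text{data}}$ and $p_\theta$ reduces to the expected negative log-likelihood of the preferred ordering, which yields the log-sigmoid objective.

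First I would write the two immediate rewards $r(s_t, a^w_{t+1})$ and $r(s_t, a^l_{t+1})$ via Equation~(\ref{equ:reward}). Because both candidate steps are expanded from the same state $s_t$ (the common-prefix property emphasized in Figure~\ref{fig:framework}), the shared baseline $-V^*(s_t)$ appearing in the value-gain term is identical for the winner and the loser and cancels in the difference. What survives is the implicit-reward difference $\beta\bigl[\log\tfrac{\pi^*(a^w_{t+1}\mid s_t)}{\pi_{\text{ref}}(a^w_{t+1}\mid s_t)} - \log\tfrac{\pi^*(a^l_{t+1}\mid s_t)}{\pi_{\text{ref}}(a^l_{t+1}\mid s_t)}\bigr]$ together with the residual value difference between $V^*(s^w_{t+1})$ and $V^*(s^l_{t+1})$; this residual is exactly the dynamic value margin.

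Next I would plug the reward difference into $p^*(a^w_{t+1} \succ a^l_{t+1}) = \sigma\bigl(r(s_t, a^w_{t+1}) - r(s_t, a^l_{t+1})\bigr)$ and define the model's preference distribution $p_\theta$ by substituting $\pi_\theta$ for $\pi^*$, so that the implicit-reward difference becomes $\beta\, h_\theta(a^w_{t+1}, a^l_{t+1})$ while the value margin stays fixed (it depends only on the optimal value function, not on $\theta$). Finally, since $p_{\text{data}}$ places its mass on the observed ordering $a^w_{t+1} \succ a^l_{t+1}$, the objective $D_{\mathrm{KL}}(p_{\text{data}} \,\|\, p_\theta)$ differs from the cross-entropy $-\mathbb{E}[\log p_\theta]$ only by a $\theta$-independent entropy term; dropping that term and taking the expectation over $a^w_{t+1}, a^l_{t+1} \sim \pi_{\text{ref}}(\cdot\mid s_t)$ gives Equation~(\ref{equ:step-dpo-loss}).

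The main obstacle I anticipate is the careful bookkeeping of the value terms and their signs: one has to verify that the state-dependent baseline $V^*(s_t)$ truly cancels (which relies on both steps sharing the prefix $s_t$) while the next-state values $V^*(s^w_{t+1})$ and $V^*(s^l_{t+1})$ do not, and then confirm that the surviving margin enters the sigmoid with the sign claimed in the statement. A secondary point worth making explicit is the equivalence ``KL minimization under $\pi_{\text{ref}}$-sampling $\equiv$ maximum-likelihood log-sigmoid loss,'' since that is what licenses replacing the distributional objective by the empirical preference loss.
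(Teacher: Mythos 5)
Your plan follows the same route as the paper's own proof: express the two step rewards via Lemma~\ref{lemma:reward-function}, cancel the shared $V^*(s_t)$ term, substitute $\pi_\theta$ for $\pi^*$ in the Bradley--Terry probability so that the implicit-reward difference becomes $\beta h_\theta$ while the value margin stays $\theta$-independent, and reduce the sampled KL objective to the expected negative log-sigmoid. Your stage (iii) is sound (in fact, since $p_{\text{data}}$ puts all mass on the observed ordering, its entropy is zero and the KL equals the cross-entropy exactly, which is precisely how the paper computes it).

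The gap is exactly the sign check that you postpone, and it is not mere bookkeeping: carried out literally with Equation~(\ref{equ:reward}) as printed, the verification fails. From Equation~(\ref{equ:reward}) the value-gain terms of the two rewards are $V^*(s^w_{t+1}) - V^*(s_t)$ and $V^*(s^l_{t+1}) - V^*(s_t)$, so after the $V^*(s_t)$ cancellation the surviving margin is $+\bigl(V^*(s^w_{t+1}) - V^*(s^l_{t+1})\bigr)$, and your stages (i)--(iii) would deliver $-\log\sigma\bigl(\beta h_\theta(a^w_{t+1},a^l_{t+1}) + (V^*(s^w_{t+1}) - V^*(s^l_{t+1}))\bigr)$, i.e., the margin with the \emph{opposite} sign to Equation~(\ref{equ:step-dpo-loss}). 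The resolution is that the sign printed in Equation~(\ref{equ:reward}) is inconsistent with its own derivation: combining the Bellman optimality equation $Q^*(s_t,a_t) = r(s_t,a_t) + V^*(s_{t+1})$ (Equation~(\ref{equ:optimal-bellman})) with the log-linearized fixed point $Q^*(s_t,a_t) = \beta\log\frac{\pi^*(a_t\mid s_t)}{\pi_{\text{ref}}(a_t\mid s_t)} + V^*(s_t)$ obtained from Equation~(\ref{equ:fix_solution}) gives $r(s_t,a_t) = \beta\log\frac{\pi^*(a_t\mid s_t)}{\pi_{\text{ref}}(a_t\mid s_t)} + V^*(s_t) - V^*(s_{t+1})$, that is, implicit reward \emph{minus} value gain. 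With this corrected expression the reward difference is $\beta h_* - \bigl(V^*(s^w_{t+1}) - V^*(s^l_{t+1})\bigr)$, and your substitution of $\pi_\theta$ for $\pi^*$ then yields exactly Equation~(\ref{equ:step-dpo-loss}), in agreement with the parametrization $r^w_{\theta,t}$ used in Section~\ref{sec:theoretical}. So to complete your proof you must rederive the reward from the Bellman equation and the fixed-point solution rather than cite Equation~(\ref{equ:reward}) verbatim; as written, your argument proves the theorem with the margin's sign flipped.
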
 The proof is shown in Appendix \ref{app:prove_steo_dpo_loss}. In traditional step DPO \citep{step-dpo}, the value function prediction at each step is defined as 0. However, we argue that the \textbf{value gain} in the immediate reward (Equation (\ref{equ:reward})), or equivalently, the term $V^*(s^w_{t+1}) - V^*(s^l_{t+1})$ in Equation (\ref{equ:step-dpo-loss}), considers the difference in the optimal value function predictions for the preferred states. This manifests in the step DPO loss as a dynamic value margin that varies depending on the preferred states \(s^w_{t+1}\) and \(s^l_{t+1}\), rather than treating all states uniformly. In practice, we use a PRM score to approximate the optimal value function. In Section \ref{sec:theoretical}, we will provide more profound theoretical insights and conclusions.

\textbf{Reward Equivalence.} To make the optimization process more controllable, we revise Equation (\ref{equ:reward}) by introducing the concept of reward equivalence.
\begin{lemma}Reward Equivalence \citep{secret-q}]
    Two reward functions \( r \) and \( r' \) are equivalent if and only if there exists a potential function \( \Phi: \mathcal{S} \rightarrow \mathbb{R} \) that satisfies the following equation:
    \begin{align*}
         r(s_t, a_t) = r'(s_t, a_t) + \Phi(f(s_t, a_t)) - \Phi(s_t). 
    \end{align*}
\end{lemma}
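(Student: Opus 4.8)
The plan is to read ``equivalent'' in the sense used throughout the maximum-causal-entropy setup: two reward functions are equivalent when they induce the same optimal policy $\pi^*$ (equivalently, the same soft advantage), since it is $\pi^*$ alone that preference learning ultimately targets. The whole argument will hinge on two structural facts already available. The fixed-point characterization (Equation (\ref{equ:fix_solution})) says that the soft advantage $A^*(s_t,a_t) := Q^*(s_t,a_t) - V^*(s_t) = \beta \log \frac{\pi^*(a_t\mid s_t)}{\pi_{\text{ref}}(a_t\mid s_t)}$ determines $\pi^*$ given $\pi_{\text{ref}}$; and the deterministic, undiscounted soft Bellman relation $Q^*(s_t,a_t) = r(s_t,a_t) + V^*(f(s_t,a_t))$ together with the log-partition recursion $V^*(s_t) = \beta \log \sum_{a} \pi_{\text{ref}}(a\mid s_t)\, e^{Q^*(s_t,a)/\beta}$ ties rewards to values. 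I will prove the biconditional by treating the two directions separately.

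For sufficiency ($\Leftarrow$), I assume $r(s_t,a_t) = r'(s_t,a_t) + \Phi(f(s_t,a_t)) - \Phi(s_t)$ and let $Q'^*,V'^*$ be the soft value functions of $r'$. The key step is to guess the shifted pair $V^*(s) := V'^*(s) - \Phi(s)$ and verify it solves the Bellman system for $r$: substituting into the Bellman relation cancels the two $\Phi(f(s_t,a_t))$ contributions and yields $Q^*(s_t,a_t) = Q'^*(s_t,a_t) - \Phi(s_t)$, after which the state-only factor $e^{-\Phi(s_t)/\beta}$ pulls out of the log-partition recursion and reproduces $V^*$ consistently. Since the $-\Phi(s_t)$ shift is common to $Q^*$ and $V^*$, the advantage is invariant, $A^*(s_t,a_t) = A'^*(s_t,a_t)$, so the two fixed points in Equation (\ref{equ:fix_solution}) coincide. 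Intuitively this is the telescoping identity $\sum_t [\Phi(s_{t+1}) - \Phi(s_t)] = \Phi(s_T) - \Phi(s_0)$: potential shaping only relabels trajectory returns by a boundary constant and cannot change which continuation is preferred.

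For necessity ($\Rightarrow$), I assume $r$ and $r'$ share the optimal policy. Because $\pi_{\text{ref}}>0$ on its support, equating the two forms of $\beta\log(\pi^*/\pi_{\text{ref}})$ forces the advantages to agree, so $Q^*(s_t,a_t) - V^*(s_t) = Q'^*(s_t,a_t) - V'^*(s_t)$, whence $Q^*(s_t,a_t) - Q'^*(s_t,a_t) = V^*(s_t) - V'^*(s_t)$ depends on the state alone; I then define $\Phi(s) := V'^*(s) - V^*(s)$. Expanding both $Q$'s through the soft Bellman relation and subtracting gives $r(s_t,a_t) - r'(s_t,a_t) = \Phi(f(s_t,a_t)) - \Phi(s_t)$, which is exactly the claimed shaping identity with an explicit potential.

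The step I expect to be the real obstacle is the treatment of the horizon/terminal boundary rather than any of the algebra. Both the Bellman verification in sufficiency and the telescoping intuition leave a residual $\Phi$ evaluated at the terminal state $s_T$, and strict policy invariance in a finite-horizon undiscounted MDP only survives if the terminal value convention is respected, i.e. one must restrict to potentials with $\Phi(s_T)=0$ (equivalently, fix $V^*(s_T)=V'^*(s_T)=0$) so that the shifted $V^*$ genuinely satisfies the terminal boundary condition and is the true optimal value function for $r$. I would therefore state the lemma for potentials vanishing at terminal states, which is automatic on the necessity side where $\Phi=V'^*-V^*$ inherits $\Phi(s_T)=0$, and note that uniqueness of the maximum-causal-entropy fixed point plus strict positivity of $\pi_{\text{ref}}$ are precisely what license inverting the softmax in the necessity direction.
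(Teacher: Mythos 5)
Your proposal is mathematically sound, but there is no proof in the paper to compare it against: the paper never proves this statement at all. Both here and in the cited source \citep{secret-q}, ``equivalent'' is \emph{defined} by the existence of the potential $\Phi$, so the ``if and only if'' is definitional and the lemma is imported without argument. You instead read ``equivalent'' as ``induces the same soft-optimal policy'' and prove the resulting nontrivial biconditional, which is the maximum-causal-entropy analogue of the classical potential-shaping theorem. Both directions of your argument check out in this deterministic, finite-horizon token MDP: for sufficiency, the shifted pair $V^*=V'^*-\Phi$, $Q^*=Q'^*-\Phi$ satisfies the soft Bellman system (the factor $e^{-\Phi(s_t)/\beta}$ pulls out of the log-partition recursion), so the advantage, and hence the fixed point in Equation~(\ref{equ:fix_solution}), is unchanged; for necessity, equality of policies forces equality of advantages, making $Q^*-Q'^*=V^*-V'^*$ state-only, and $\Phi:=V'^*-V^*$ then emerges from subtracting the two Bellman relations as exactly the shaping term. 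Your terminal-state caveat is also a genuine and correct observation rather than pedantry: with the convention $V^*(s_T)=0$ used in Appendix~\ref{app:prove_for_reward_function}, sufficiency fails for potentials that vary across terminal states, since such shaping perturbs trajectory returns non-uniformly and can change the optimal policy, so under your reading the lemma must indeed be restricted to potentials vanishing (or constant) on terminal states. Reassuringly, the paper's actual use of the lemma, taking $\Phi=\gamma V^*$ to derive the $\gamma$-scaled loss in Equation~(\ref{equ:step-dpo-gamma}), respects this restriction because $V^*$ vanishes at terminal states. In short: the paper buys brevity by treating the statement as a definition and inheriting policy invariance from the cited work, while your version supplies the mathematical content that justifies calling such reward functions ``equivalent'' in the first place.
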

In Equation (\ref{equ:reward}), the potential function is our optimal value function, i.e., \( \Phi(s) = V^*(s) \). At the same time, it is easy to see that when we scale this potential function, \( \Phi'(s) = \gamma \Phi(s) \), \( \Phi' \) still satisfies the definition of  potential function. Therefore, we can modify Equation (\ref{equ:reward}) to obtain an equivalent reward expression:
\begin{align*}
     r^{\gamma}(s_t, a_t) = r(s_t, a_t) + \gamma\Phi(f(s_t, a_t)) - \gamma\Phi(s_t). 
\end{align*} Repeating the derivation in Section  \ref{sec:method-prm-step-dpo} , we modify the final loss as follows:
\begin{align}
    \mathcal{L}^{\gamma}_{\text{step-dpo}}
    &=  -\mathbb{E}_{a^w_{t+1}, a^l_{t+1} \sim \pi_{\text{ref}}(\cdot | s_t)} [ \nonumber \\
   & \log \sigma (\beta h_{\theta}(a^w_{t+1}, a^l_{t+1}) \nonumber \\
    & - \gamma(V^*(s^w_{t+1}) - V^*(s^l_{t+1}))) ].
    \label{equ:step-dpo-gamma}
\end{align}
\begin{remark}
    Although the concept of reward equivalence in \citet{secret-q} implies that the optimal preference model belongs to the same equivalence class, including the original step-DPO when \(\gamma = 0\), the introduction of \(\gamma\) makes the optimization process more controllable due to its influence on optimization. This has been verified in Section \ref{sec:exp-gamma}.
\end{remark}

\subsection{Tree-Based Self-Sampling on LLMs}\label{sec:step-data}

 Traditional reasoning algorithms (token-level decoding) is almost impossible to guarantee the generation of reasoning trajectories with identical prefixes. To address this issue, this paper adopts a tree-structured reasoning approach, as illustrated in Figure \ref{fig:framework}. Specifically, the process is divided into four steps: ``Selection, Expansion, Collection and Scoring''.   During the selection process, at the current state \(s_t\), we record the average log probability score for each child node \(a_t\), defined as:
\begin{align*}
    s(a_t | s_t) = \frac{1}{|a_t|} \sum_{i=0}^{|a_t|} \log \pi_{\text{infer}}(a_{t,i} | s_t | a_{t,<i}),
\end{align*}
where \(|a_t|\) represents the token length of the current step, \(a_{t,<i}\) denotes the first \(i-1\) tokens of \(a_t\), and \(\pi_{\text{infer}}\) represents the probability distribution output of the inference model (policy in RL). In practice, we set $\pi_{\text{infer}} = \pi_{\text{ref}}$. Furthermore, we normalize the score distribution of all child nodes and perform sampling to select  child nodes. Each selection starts from the root node and proceeds until reaching a leaf node that contains the final answer. If a node is not a terminal node and has no child nodes, we expand the node to obtain \(C\) possible reasoning steps. After performing the above steps \(K\) times, we traverse the expanded prefix tree and collect all answers that contain complete reasoning paths. Finally, we invoke the PRM  to score each step of the reasoning trajectory, resulting in the final step-level dataset:
\begin{align*}
    \mathcal{D}_{\text{step}} = \{ &( s_0^{(i)}, s_t^{(i,j)}, v_t^{(i,j)} ) \nonumber \\
    &\mid i \in [N], j \in [K], t \in |\tau^{(i,j)}| \},
\end{align*}
where $N$ is the number of the problems, \(v_t^{(i,j)}\) represents the PRM score of the state \(s_t^{(i,j)}\) in the \(j\)-th prefix sequence of the problem \(s_0^{(i)}\).

\subsection{PRM-Enhanced SFT \& DPO}

To make the model's learning process smoother, we introduce the concept of curriculum learning, initially allowing the model to learn strategies at the sentence-level. This step leverages the signal responses from the PRM on sampled trajectories to perform rejection sampling, and employs both supervised learning and preference learning to continuously improve the model's reasoning capabilities. Specifically, we define the following positive and negative sample trajectories:
\begin{align*}
    \tau_{+}^{(i)} = \max_{j\in [K]} \min_{v_t^{(i,j)}} \mathcal{D}_{\text{step}}^+, \\ 
    \tau_{-}^{(i)} = \min_{j\in [K]} \min_{v_t^{(i,j)}} \mathcal{D}_{\text{step}}^-.
\end{align*}
Here, \(\mathcal{D}_{\text{step}}^+\) and \(\mathcal{D}_{\text{step}}^-\) represent complete trajectories with correct and incorrect final answers, respectively. During the SFT phase, we minimize the next token prediction loss on \(\tau_{+}^{(i)}\). In the DPO phase, we select positive samples from \(\{\tau_{+}^{(i)}\}_{i=1}^N\) and negative samples from \(\{\tau_{-}^{(i)}\}_{i=1}^N\), thereby constructing preference samples for sentence-level DPO. We emphasize that SFT and DPO optimize the model's reasoning capabilities at a coarse-grained level, aiming to warm up the model's reasoning abilities and lay the foundation for subsequent step-level preference learning.

\section{Theoretical Analysis}\label{sec:theoretical}
In this Section, we prove that the equivalence between offline step DPO and online policy gradient under the specific reward definition. 
\begin{definition}[Preference decoding model \(\pi^p_\theta\) induced by \(\pi_\theta\)]
    Assume that when \(s = s_t\), the possible action space $\mathcal{A}_t = \{a_{t+1}^w,a_{t+1}^l\}$ . We define \(\pi^p_\theta\) as the following parameterized distribution:
    \begin{align*}
    \pi^p_\theta(a^w_{t+1} | s_t) = \sigma(r^w_{\theta,t} - r^l_{\theta,t}),
    \end{align*}
    where,\begin{align*}
            r^w_{\theta,t} = \beta \log \frac{\pi_\theta(a^w_{t+1} | s_t)}{\pi_{\text{ref}}(a^w_{t+1} | s_t)} - V^*(s^w_{t+1}) + V^*(s_t),\\
    r^l_{\theta,t} = \beta \log \frac{\pi_\theta(a^l_{t+1} | s_t)}{\pi_{\text{ref}}(a^l_{t+1} | s_t)} - V^*(s^l_{t+1}) + V^*(s_t).
    \end{align*}
\end{definition}

\begin{remark}
The preference decoding model \(\pi^p_\theta\) can be viewed as performing sampling on a binary prefix tree based on preference probabilities. This model relies on the probability outputs of the standard language model \(\pi_\theta\).
\end{remark}

\begin{lemma}[Online Policy Gradient on \(\pi^p_\theta\) \citep{rpg} ]
For any MDP, the expected long-term reward on \(\pi^p_\theta\) is given by \(J(\theta) = \sum_{\tau} \pi^p_{\theta}(\tau) r(\tau)\), where \(r(\tau)\) represents the long-term reward of trajectory \(\tau\). The policy gradient of this expected long-term reward on  \(\pi^p_\theta\) is:
\begin{align}
    \nabla_{\theta} J(\theta) = \mathbb{E}_{\tau \sim \pi^p_{\theta}} \left[ r(\tau) \sum_{t=0}^{T-1} \nabla_{\theta} \log \pi^p_\theta(a^w_{t+1} | s_t) \right].
    \label{equ:online-gd}
\end{align}

\end{lemma}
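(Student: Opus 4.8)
The plan is to apply the standard policy-gradient (REINFORCE / score-function) argument, specialized to the preference decoding model $\pi^p_\theta$, whose only source of $\theta$-dependence along a trajectory is the sequence of single-step preference probabilities $\pi^p_\theta(a^w_{t+1} \mid s_t) = \sigma(r^w_{\theta,t} - r^l_{\theta,t})$. First I would differentiate $J(\theta) = \sum_{\tau} \pi^p_\theta(\tau) r(\tau)$ directly, observing that $r(\tau)$ carries no dependence on $\theta$, so that $\nabla_\theta J(\theta) = \sum_{\tau} r(\tau)\, \nabla_\theta \pi^p_\theta(\tau)$. Then I would invoke the log-derivative identity $\nabla_\theta \pi^p_\theta(\tau) = \pi^p_\theta(\tau)\, \nabla_\theta \log \pi^p_\theta(\tau)$, which rewrites the sum as an expectation over trajectories drawn from $\pi^p_\theta$, giving the intermediate form $\nabla_\theta J(\theta) = \mathbb{E}_{\tau \sim \pi^p_\theta}[ r(\tau)\, \nabla_\theta \log \pi^p_\theta(\tau) ]$.

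The crux is then to factor $\log \pi^p_\theta(\tau)$ into per-step contributions. I would write the trajectory probability as the product of the initial-state distribution $\rho(s_0)$, the transition factors induced by $f(s_t, a_{t+1}) = s_t \mid a_{t+1}$ (which are deterministic here, hence equal to $1$), and the single-step preference probabilities $\pi^p_\theta(a^w_{t+1}\mid s_t)$ for $t = 0, \dots, T-1$ along the selected winning path. Taking logarithms turns this product into a sum, and because $\rho(s_0)$ and the environment dynamics are independent of $\theta$, their gradients vanish. What remains is $\nabla_\theta \log \pi^p_\theta(\tau) = \sum_{t=0}^{T-1} \nabla_\theta \log \pi^p_\theta(a^w_{t+1}\mid s_t)$, and substituting this back into the expectation yields exactly Equation (\ref{equ:online-gd}).

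The main obstacle I anticipate is justifying this factorization cleanly within the preference-decoding picture described in the preceding remark: a trajectory is a path down the binary prefix tree, where at each node the model commits to the preferred action $a^w_{t+1}$ with probability $\sigma(r^w_{\theta,t} - r^l_{\theta,t})$. I would need to argue that, since each action deterministically appends a reasoning step to the state, the trajectory probability collapses to the product of these preference probabilities alone, so that no hidden $\theta$-dependence leaks in through the dynamics. Once that point is pinned down, the remainder is the textbook score-function manipulation and requires no further machinery.
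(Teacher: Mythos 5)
Your proof is correct, but note that the paper does not actually prove this lemma: it imports it wholesale from the cited reference \citep{rpg} as an instance of the standard policy-gradient (REINFORCE) theorem, and your derivation is exactly that standard argument. Differentiating $J(\theta)=\sum_{\tau}\pi^p_\theta(\tau)r(\tau)$, applying the log-derivative identity, and factoring $\log \pi^p_\theta(\tau)$ into per-step terms -- where the deterministic transition $f(s_t,a_{t+1})=s_t|a_{t+1}$ and the initial distribution $\rho(s_0)$ carry no $\theta$-dependence and hence drop out of the gradient -- is sound, and your attention to the factorization of the trajectory probability over the binary prefix tree is precisely the point that needs pinning down. One caveat is worth making explicit: your first step requires $r(\tau)$ to be independent of $\theta$, which is consistent with the lemma as stated (the reward is a property of the MDP, not the policy), but the paper subsequently instantiates the lemma in Theorem \ref{theorem:equivalence} with $r(\tau)=\prod_{t}\pi_{\text{ref}}(a_t|s_t)/\pi^p_\theta(a_t|s_t)$, which \emph{does} depend on $\theta$; under that substitution the right-hand side of Equation (\ref{equ:online-gd}) is no longer the full gradient of $J(\theta)$, since an additional $\nabla_\theta r(\tau)$ term would appear. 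That is a tension in how the paper uses the lemma downstream, not a gap in your proof of the lemma as stated.
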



\begin{theorem}[Equivalence Between Offline Step DPO and Online Policy Gradient]\label{theorem:equivalence}
 If we define the reward in Equation (\ref{equ:online-gd}) as \(r(\tau) = \prod_{i=1}^T \frac{\pi_{\text{ref}}(a_t | s_t)}{\pi^p_\theta(a_t | s_t)}\), and define the \textbf{Offline every-step preference loss} as:
\begin{align}
    \mathcal{L}_{\text{every-step}} = \nonumber \\ 
    &\mathbb{E}_{\tau \sim \pi^p_{\text{ref}}} \left[- \sum_{t=0}^{T-1}  \log \pi^p_\theta(a_{t+1}^w | s_t) \right],
    \label{equ:every-step}
\end{align}
then the following equivalence holds:
\begin{align*}
    \nabla_{\theta} J(\theta) = -\nabla_{\theta} \mathcal{L}_{\text{every-step}}.
\end{align*}
\label{the:equ-off-on}
\end{theorem}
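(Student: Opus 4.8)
The plan is to show that both the online policy gradient $\nabla_\theta J(\theta)$ and the negated loss gradient $-\nabla_\theta \mathcal{L}_{\text{every-step}}$ collapse to one common quantity, namely $\mathbb{E}_{\tau \sim \pi^p_{\text{ref}}}\left[\sum_{t=0}^{T-1} \nabla_\theta \log \pi^p_\theta(a^w_{t+1}\mid s_t)\right]$. The bridge from the on-policy expression (sampling under $\pi^p_\theta$) to the off-policy one (sampling under $\pi^p_{\text{ref}}$) is an importance-sampling identity, and the entire argument turns on the prescribed reward being precisely the trajectory-level importance weight.

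First I would make the factorization of the preference-decoding trajectory explicit. Because $\pi^p_\theta$ is a Markov policy on the binary preference tree, a trajectory $\tau = (s_0, a^w_1, \ldots, a^w_T)$ has probability $\pi^p_\theta(\tau) = \prod_{t=0}^{T-1} \pi^p_\theta(a^w_{t+1}\mid s_t)$, and likewise $\pi^p_{\text{ref}}(\tau) = \prod_{t=0}^{T-1} \pi^p_{\text{ref}}(a^w_{t+1}\mid s_t)$. Reading the prescribed reward as the product $r(\tau) = \prod_{t=0}^{T-1} \frac{\pi^p_{\text{ref}}(a^w_{t+1}\mid s_t)}{\pi^p_\theta(a^w_{t+1}\mid s_t)} = \pi^p_{\text{ref}}(\tau)/\pi^p_\theta(\tau)$, the crucial cancellation is immediate: $\pi^p_\theta(\tau)\, r(\tau) = \pi^p_{\text{ref}}(\tau)$.

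Next I would substitute this into the policy-gradient formula from the preceding lemma. Writing the expectation as a sum over trajectories, $\nabla_\theta J(\theta) = \sum_\tau \pi^p_\theta(\tau)\, r(\tau) \sum_{t=0}^{T-1}\nabla_\theta \log \pi^p_\theta(a^w_{t+1}\mid s_t)$, the cancellation replaces $\pi^p_\theta(\tau) r(\tau)$ by $\pi^p_{\text{ref}}(\tau)$, so the sum becomes the reference-sampled expectation $\mathbb{E}_{\tau \sim \pi^p_{\text{ref}}}\left[\sum_{t=0}^{T-1} \nabla_\theta \log \pi^p_\theta(a^w_{t+1}\mid s_t)\right]$. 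For the loss side, since the sampling distribution $\pi^p_{\text{ref}}$ carries no dependence on $\theta$, the gradient commutes with both the expectation and the finite sum, giving $-\nabla_\theta \mathcal{L}_{\text{every-step}} = \mathbb{E}_{\tau \sim \pi^p_{\text{ref}}}\left[\sum_{t=0}^{T-1} \nabla_\theta \log \pi^p_\theta(a^w_{t+1}\mid s_t)\right]$, which matches the online side and closes the proof.

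The step I expect to be most delicate is the treatment of the reward's own dependence on $\theta$. The reward $r(\tau)$ contains $\pi^p_\theta$ in its denominator, yet the lemma's REINFORCE-style gradient is the score-function estimator, which holds when the reward is treated as a fixed weight; I would need to be explicit that we apply the lemma with $r(\tau)$ as the given signal, frozen inside the score-function term rather than re-differentiated, and to confirm that the importance-weight reading $r(\tau)=\pi^p_{\text{ref}}(\tau)/\pi^p_\theta(\tau)$ is the consistent interpretation of the product written in the theorem statement. The remaining bookkeeping, namely verifying the per-step factorization and that only the chosen (winning) branch contributes at each node, is routine given the Markov structure of $\pi^p_\theta$.
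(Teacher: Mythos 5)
Your proposal is correct and follows essentially the same route as the paper's proof: both hinge on the identity $\pi^p_\theta(\tau)\, r(\tau) = \pi^p_{\text{ref}}(\tau)$ (the prescribed reward is exactly the importance weight converting sampling under $\pi^p_\theta$ into sampling under $\pi^p_{\text{ref}}$) together with commuting $\nabla_\theta$ past the $\theta$-independent expectation over $\pi^p_{\text{ref}}$; the paper merely runs the chain of equalities in the opposite direction, starting from $\nabla_\theta \mathcal{L}_{\text{every-step}}$ and importance-reweighting to $\pi^p_\theta$. Your caveat about treating $r(\tau)$ as a frozen signal inside the score-function formula (rather than re-differentiating its $\theta$-dependence) is the right reading, and it is also what the paper's proof implicitly does when it identifies the factored importance weight with $r(\tau)$.
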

The proof is shown in Appendix \ref{app:theorem:equivalence}. 
\begin{remark}
    It is easy to see that \(\mathcal{L}_{\text{every-step}}\) (Equation (\ref{equ:every-step})) can be considered as the equivalent expression of \(\mathcal{L}_{\text{step-dpo}}\) (Equation (\ref{equ:step-dpo-loss})) when the sampling tree branches at \(C=2\) and preference sampling is performed for every action at each step. 
    Theorem \ref{the:equ-off-on}  demonstrates that,\textbf{ under the specific definition of the reward, optimizing the gradient of the offline preference loss is equivalent to the policy gradient of the preference decoding model in the online setting.} Additionally, for the definition of the reward \(r(\tau)\), when the reward is large, it indicates that the trajectory probability output \(\pi^p_\theta(\tau)\) of the preference decoding model is relatively small. To reduce the overall loss, the optimization process will focus more on the loss of this particular trajectory at this step.
\end{remark}


\begin{table*}[!hbt]
\centering
\begin{tabular}{c|c|cc|cc}
\toprule
Model     & Size  & Open & General     & MATH500 & GSM8k \\ \midrule
Claude-3-Opus* & - & \XSolidBrush & \CheckmarkBold & 60.1 & 95.0 \\
GPT4-1106 \citep{gpt4}* & - & \XSolidBrush & \CheckmarkBold & 64.3 & 91.4\\
GPT4o-0513* & - & \XSolidBrush & \CheckmarkBold & 76.6 & 95.8 \\
o1 \citep{o1}* & - & \XSolidBrush & \CheckmarkBold & 94.8 & - 
\\ \hline
Qwen2-7B-Instruct-Step-DPO \citep{step-dpo} & 7B &\CheckmarkBold &\XSolidBrush &  55.0 & 85.4 \\
DeepSeek-MATH-7B-Instruct \citep{deepseek-math} &7B & \CheckmarkBold &\XSolidBrush & 44.4 & 80.9 \\
OpenMath2-Llama3.1-8B \citep{openmath} & 8B & \CheckmarkBold & \XSolidBrush & 65.4  & 90.1 \\
Llama3.1-8B-Instruct \citep{Llama3p1} & 8B & \CheckmarkBold &\CheckmarkBold &47.0 & 82.6
\\
Qwen2.5-7B-Instruct \citep{qwen2.5}& 7B &\CheckmarkBold &\CheckmarkBold & 72.8 & 89.3
\\\hline
Qwen2.5-7B-Base  & 7B & \CheckmarkBold &  \CheckmarkBold & 60.0    & 82.3  \\ 
+SFT-PRM         & 7B &   \CheckmarkBold &  \XSolidBrush & 64.4    & 88.1  \\
+SFT-PRM \& DPO-PRM  & 7B &   \CheckmarkBold & \XSolidBrush  & 68.2    & 89.3  \\
+SPPD          & 7B &  \CheckmarkBold  &    \XSolidBrush    & \begin{tabular}[c]{@{}c@{}}\textbf{71.0}\\ \textcolor{blue}{+2.8\%}\end{tabular}          & \begin{tabular}[c]{@{}c@{}}\textbf{89.8}\\\textcolor{blue}{+0.5\%}\end{tabular}   \\
+SPPD+MAJ@64     & 7B &  \CheckmarkBold  &   \XSolidBrush  & 76.4    & 93.2  \\
+SPPD+ORM\_MAX@64  & 7B &   \CheckmarkBold & \XSolidBrush   & 74.0    & \textbf{94.9}  \\
+SPPD+ORM\_VOTE@64  & 7B &  \CheckmarkBold  & \XSolidBrush  & \textbf{79.0}    & 94.7 \\ 
+SPPD-Stage2 & 7B & \CheckmarkBold  & \XSolidBrush  &  \begin{tabular}[c]{@{}c@{}}\textbf{72.2}\\\textcolor{blue}{+4.0\%}\end{tabular} & \begin{tabular}[c]{@{}c@{}}\textbf{90.3}\\\textcolor{blue}{+1.0\%}\end{tabular} \\
+SPPD-Stage2+MAJ@64     & 7B &  \CheckmarkBold  &   \XSolidBrush  & 78.6    & 93.6  \\
+SPPD-Stage2+ORM\_MAX@64  & 7B &   \CheckmarkBold & \XSolidBrush   & 78.0    & \textbf{95.0}  \\
+SPPD-Stage2+ORM\_VOTE@64  & 7B &  \CheckmarkBold  & \XSolidBrush  & \begin{tabular}[c]{@{}c@{}}\textbf{80.4}\\\textcolor{blue}{+12.2\%}\end{tabular}
  & \begin{tabular}[c]{@{}c@{}}94.6\\\textcolor{blue}{+5.3\%}\end{tabular} \\ 
\bottomrule
\end{tabular}
\caption{Main Results. * denotes we use officially reported results.}
\label{tab:main_result}
\end{table*}

\section{Experiments}
\subsection{Setup}
\textbf{Datasets.} For the training prompt data, we sample a total of 10k prompts from the training datasets of GSM8k \citep{gsm8k} and MATH \citep{MATH}, with GSM8K and MATH accounting for 40\% and 60\% respectively. We use Qwen2.5-7B-Base \citep{qwen2.5} and Llama3.1-8B-Instruct \citep{Llama3p1} as the base models, and employ Skywork-o1-Open-PRM-Qwen-2.5-7B \citep{skywork-prm} as PRM to generate $\mathcal{D}_{\text{step}}$ using the step data generation method mentioned in Section \ref{sec:step-data}. For more information regarding the data format and PRM, please refer to the Appendix \ref{sec:data-example} \& \ref{prm-distribution}.\\
\textbf{Evaluation.} The maximum generation length for inference is set at 2048. The test set includes in-domain subsets such as GSM8k and MATH500, as well as out-domain subsets like Gaokao2023 \citep{gaokao}, OCW Course (OCW) \citep{ocw}, and the OlympiadBench \citep{Olympiadbench} test subset OE-TO-MATH-COMP. 
The testing methods comprise: 
    1) \textbf{Greedy-CoT}: Test results based on greedy decoding and CoT prompt pass@1.
    2)  \textbf{MAJ@N}: Repeat inference $N$ times based on the CoT prompt, and select the most frequently occurring answer as the final answer.
    3) \textbf{ORM\_VOTE@N}: Repeat inference $N$ times based on the CoT prompt, use Skywork-o1-Open-PRM-Qwen-2.5-7B as the ORM for scoring, aggregate scores for identical answers, and choose the answer with the highest score.
    4) \textbf{ORM\_MAX@N}: Omit the step of aggregating scores for identical answers in  \textbf{ORM\_VOTE@N} and directly select the answer with the highest score. 
More evaluation methods refer to Appendix \ref{app:evaluation}.\\
\textbf{Implementation.} During the data generation phase, we perform tree sampling for each question with a count of $K=64$, and each node branches into $C=2$. When selecting step-level preference pairs, to mitigate the impact of PRM scoring noise, we only use action preference pairs with a scoring difference exceeding 0.5 for training (PRM scores range between $0$ and $1$). In the SFT phase, we use the Adam optimizer with a learning rate of 5e-6, while in the DPO and step-DPO phases, we employ the SGD optimizer with a learning rate of 1e-5, both utilizing the cosine method for learning rate decay. The $\beta$ for both DPO and step DPO is set to 0.1. The $\gamma$ for step DPO is chosen from \{0.1,0.5,1.0,2.0,5.0\}. All experiments are conducted on 8 Nvidia 80GB H800 GPUs.

\subsection{Main Result}
\textbf{Compared to the base model}: Our approach achieves significant improvements without utilizing any stronger model's responses for distillation shown in Table \ref{tab:main_result}. Specifically, using SFT-PRM, we observe enhancements of 4.4\% and 5.8\% on the in-domain evaluation datasets MATH and GSM8k, respectively. With DPO-PRM, the improvements are 3.8\% and 1.2\%, respectively, on these same datasets. Building on this foundation, we further enhances the model's reasoning capabilities using SPPD, achieving additional improvements of 2.8\% and 0.5\% on the two evaluation datasets. The gains from SPPD stem from leveraging PRM signals, transitioning from coarse-grained optimization at the sentence level to fine-grained dynamic optimization at the step level. Additionally, during the inference phase, increasing computational load and employing the \textbf{ORM\_{VOTE}} aggregation strategy further demonstrates the model's peak reasoning capabilities, achieving accuracies of 79\% and 94.7\% on MATH and GSM8k, respectively, outperforming current models of similar size.
\\\textbf{Continued gains in the second stage}: In the first stage, the training data generated by the base model has been fully utilized. Following the principles of offline RL, we update the policy model’s sampling trajectories, using the best model trained in the first stage as our new policy model to repeat our training process. This resulted in the SPPD-Stage2 model. Compared to SPPD, SPPD-Stage2 achieves further improvements of 1.2\% and 0.5\% on MATH and GSM8k, respectively. These results highlight the effectiveness of updating the policy model and demonstrate the robustness of the SPPD.
\subsection{Ablation Study}\label{sec:exp-gamma}
\textbf{Different Base Model.}
We evaluate the effectiveness of the SPPD method on different base models, specifically Llama3.1-8B-Instruct and Qwen2.5-7B-Instruct. Given that Instruct models undergo sufficient optimization at the sentence level, we do not perform PRM-SFT and PRM-DPO training on these models. Instead, we directly utilize the trajectories from the Instruct models for dynamic value margin step DPO training. The results appear in Table \ref{tab:ablation-diff-model}. The findings indicate that on the Llama3.1-8B-Instruct model, the SPPD method achieves improvements of 4.6\% and 3.6\% on the MATH and GSM8k evaluation datasets, respectively. On the Qwen2.5-7B-Instruct model, the SPPD method improves performance by 2.2\% and 0.8\%, respectively. These experimental results demonstrate that the SPPD method performs well across different base models, showcasing its robustness with respect to the choice of base model. 
\begin{table}[h]
\resizebox{0.5\textwidth}{!}{
\begin{tabular}{ccc}
\toprule
Model    & MATH500  & GSM8K \\ \midrule
Llama3.1-8B-Instruct & 46.6  & 81.2  \\ \hline
+SPPD               & \begin{tabular}[c]{@{}c@{}}51.2\\ \textcolor{blue}{+4.6\%}\end{tabular} & \begin{tabular}[c]{@{}c@{}}84.8\\ \textcolor{blue}{+3.6\%}\end{tabular} \\
+SPPD+MAJ@64 & 58.2 & 88.5      \\
+SPPD+ORM\_MAX@64   & \textbf{ 67.0} &\textbf{ 92.0}  \\
+SPPD+ORM\_VOTE@64  &  \begin{tabular}[c]{@{}c@{}}66.4\\ \textcolor{blue}{+19.8\%}\end{tabular} & \begin{tabular}[c]{@{}c@{}}90.7\\ \textcolor{blue}{+9.5\%}\end{tabular} \\  \hline
Qwen2.5-7B-Instruct &  72.8 &  89.3  \\ 
+SPPD               & \begin{tabular}[c]{@{}c@{}}75.0\\ \textcolor{blue}{+2.2\%}\end{tabular} & \begin{tabular}[c]{@{}c@{}}91.1\\ \textcolor{blue}{+0.8\%}\end{tabular} \\
+SPPD+MAJ@64 & 80.6 & 93.4      \\
+SPPD+ORM\_MAX@64   &  77.0 & \textbf{95.2}  \\
+SPPD+ORM\_VOTE@64  & \begin{tabular}[c]{@{}c@{}}\textbf{82.2}\\ \textcolor{blue}{+9.4\%}\end{tabular} & \begin{tabular}[c]{@{}c@{}}94.6\\ \textcolor{blue}{+5.3\%}\end{tabular} \\  \bottomrule                    
\end{tabular}
}
\caption{Result on Llama3.1-8B-Instruct and Qwen2.5-7B-Instruct.}
\label{tab:ablation-diff-model}
\end{table}
\\\textbf{Generalization on Out-Domain Distributions.}
To evaluate the generalization capabilities of SPPD on out-domain distributions, we select three out-domain evaluation datasets: GaoKao2023, OCW and OlympaidBench (using only the OlympaidBench-OE-TO-MATH-COMP portion). The results are presented in Table \ref{tab:ablation-out-domain}. The experiments show that using Qwen2.5-7B-Base as the base model, after applying SPPD, there are steady improvements across all three out-of-domain evaluation datasets. Specifically, improvements over the base model stand at 8.8\%, 13.7\%, and 5.6\%, respectively. Over PRM-DPO, the improvements reach 1.8\%, 4.8\%, and 2.4\%, respectively. Furthermore, the reasoning capabilities see further enhancement through the ORM\_{VOTE} aggregation strategy. 
\\ \textbf{Effectiveness of Dynamic Value Margin.}
In Section \ref{sec:step-dpo}, we model the dynamic value margin variation using MDP approach, deriving a step DPO method with dynamically changing margins from a mathematical perspective. To validate the effectiveness of this dynamic value margin approach, we use Qwen2.5-7B-Base and Llama3.1-8B-Instruct as base models, followed by PRM-SFT and PRM-DPO training. We then compare SPPD with both no-margin step DPO ($\gamma = 0$) and fixed-margin step DPO. The results are summarized in Table \ref{tab:ablation-fix-margin}. The findings reveal that fixed-margin step DPO outperforms no-margin step DPO, indicating that adjusting the margin benefits the learning process of step DPO. Meanwhile, Compared to fixed-margin step DPO, SPPD demonstrates superior performance. On the Qwen model, improvements on MATH and GSM8k are 0.9\% and 0.31\%, respectively, while on the Llama model, the improvements are 2.0\% and 1.3\%, respectively. This improvement stems from our consideration of the value model score differences between preference pairs during modeling, which dynamically adjusts the margin for preference learning based on signals from the value model. SPPD makes the step-level preference training more reliable and reduces the risk of overfitting.
\\\textbf{Impact of $\gamma$.}
\begin{figure}[!ht]
    \centering
    \includegraphics[width=1.0\linewidth]{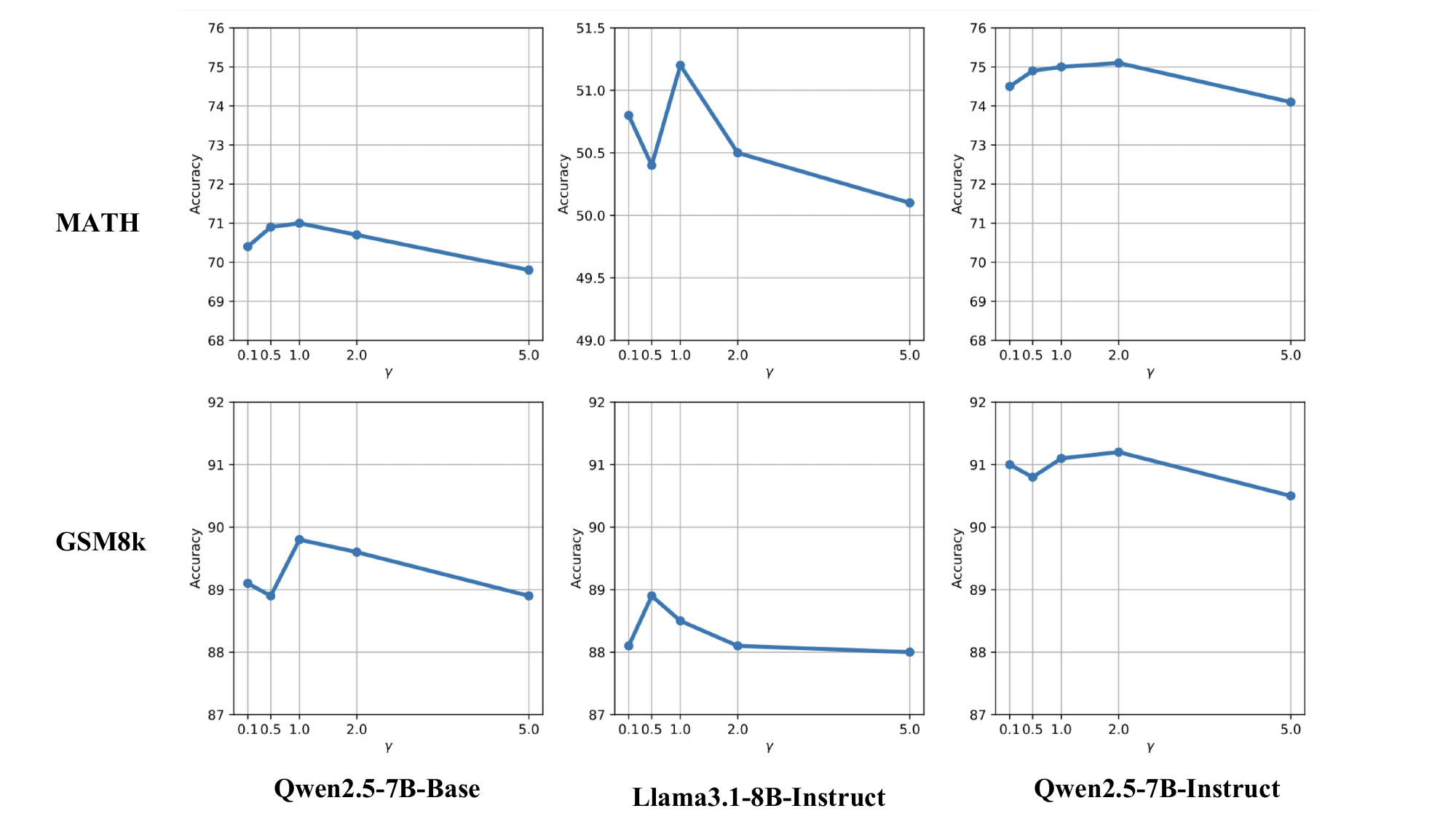}
    \caption{Impact of $\gamma$ in dynamic value margin.}
    \label{fig:gamma}
\end{figure}
To investigate the impact of the hyperparameter \(\gamma\) on the SPPD method as described in Formula \ref{equ:step-dpo-gamma}, we selecte three base models: Qwen2.5-7B-Base, Llama3.1-8B-Instruct, and Qwen2.5-7B-Instruct. We adjust \(\gamma\) within the set \(\{0.1, 0.5, 1.0, 2.0, 5.0\}\) and evaluated the performance of these models on the MATH and GSM8k datasets. The results are presented in Figure \ref{fig:gamma}. Our experimental findings indicate that selecting an appropriate \(\gamma\) is beneficial for the training of SPPD. It is observed that both excessively large and small values of \(\gamma\) are detrimental to the training of dynamic value margins in SPPD, thereby affecting the generalization to some extent. However, overall, the performance remains relatively stable, particularly on the GSM8k dataset. This suggests that a balanced choice of \(\gamma\) is crucial for optimizing the effectiveness of the SPPD approach across different models.


\begin{table}[!ht]
\resizebox{0.5\textwidth}{!}{
\begin{tabular}{cccc}

\toprule
Model               & GaoKao2023                                            & OCW                                                   & OlympaidBench*   \\\midrule
Qwen2.5-7B-Base     & 48.0                                                  & 6.3                                                   & 20.5                                                  \\\hline
+SFT-PRM            & 52.2                                                  & 19.1                                                  & 22.8                                                  \\
+SFT-PRM \& DPO-PRM & 55.0                                                  & 16.1                                                  & 23.7                                                  \\
+SPPD              & \begin{tabular}[c]{@{}c@{}}\textbf{56.8}\\ \textcolor{blue}{+1.8\%}\end{tabular} & \begin{tabular}[c]{@{}c@{}}\textbf{20.0}\\ \textcolor{blue}{+4.8\%}\end{tabular} & \begin{tabular}[c]{@{}c@{}}\textbf{26.1}\\ \textcolor{blue}{+2.4\%}\end{tabular} \\
+SPPD+MAJ@64       & 62.6                                                  & 29.4                                                  & 43.3                                                  \\
+SPPD+ORM\_MAX@64  & 63.4                                                  & 28.3                                                  & 41.4                                                  \\
+SPPD+ORM\_VOTE@64 & \begin{tabular}[c]{@{}c@{}}\textbf{64.4}\\ \textcolor{blue}{+9.4\%}\end{tabular}                       & \begin{tabular}[c]{@{}c@{}}\textbf{30.9}\\ \textcolor{blue}{+14.8\%}\end{tabular}                                        & \begin{tabular}[c]{@{}c@{}}\textbf{45.4}\\ \textcolor{blue}{+21.7\%}\end{tabular}    \\ \bottomrule                                   
\end{tabular}
}
\caption{Result on out-domain test datasets. OlympaidBench* denotes we only use OlympaidBench-OE-TO-Math-COMP test dataset.}
\label{tab:ablation-out-domain}
\end{table}

\begin{table}[htb]
\resizebox{0.5\textwidth}{!}{
\begin{tabular}{c|c|c|cc}
\toprule
Model & Method     & Margin & MATH500 & GSM8K \\ \midrule
\multirow{2}{*}{Qwen2.5-7B}&  SPPD  & Dynamic & \textbf{71.00}  & \textbf{89.80} \\ \
&\multirow{2}{*}{\begin{tabular}[c]{@{}c@{}}Step-dpo-\\ fix-margin \end{tabular}} & 0  & 69.60   & 89.40     \\
& & $\gamma^*$ & 70.10    & 89.49    \\ \hline
\multirow{2}{*}{Llama3.1-8B} &SPPD   & Dynamic & \textbf{51.2}  & \textbf{84.8} \\ 
&\multirow{2}{*}{\begin{tabular}[c]{@{}c@{}}Step-dpo-\\ fix-margin \end{tabular}} & 0   &   48.8 &   83.2   \\ 
& & $\gamma^*$                & 49.2    &  83.5  \\
\bottomrule
\end{tabular}
}
\caption{SPPD vs fixed margin step DPO on Qwen2.5-7B-Base and Llama3.1-8B-Instruct. $\gamma^*$ represents $\gamma(V^*(s^w_{t+1})-V^*(s^l_{t+1}))=\gamma^*$ in Formula \ref{equ:step-dpo-gamma}.} 
\label{tab:ablation-fix-margin}
\end{table}

\section{Conclusion}
In this work, we propose SPPD, a self-training with process  preference learning using dynamic value margin. SPPD utilizes the Bellman optimality equation and the online RL objective modeled with MDP and designs a step-level tree self-sampling scheme without any distillation. Moreover, we propose a SFT and DPO scheme using PRM for rejection sampling, making the training of SPPD smothor and more effective. Finally, we theoretically demonstrate that under specific reward constraints, our method is equivalent to on-policy policy gradient optimization.

\newpage
\section*{Limitations}
Several limitations remain in our current work. Firstly, our work relies on the effectiveness of PRM, and studies have shown that PRM's performance varies across different policy models and task scenarios; some PRMs may fail under specific tasks \citep{qwen-prm}. This work neglects the updates of PRM. As policy is continuously iterated, PRM faces the risk of becoming ineffective. Additionally, both PPO and GRPO are modeled based on bandit, and how to integrate MDP modeling with  on-policy methods remains an important subject for future research. 

\bibliography{custom}

\appendix

\section{Data Example}\label{sec:data-example}

We demonstrate the preference trajectories sampled from the tree using SPPD as shown below. As previously mentioned, the chosen step and the rejected step are selected based on their PRM score difference exceeding 0.5.

\begin{tcolorbox}[nobeforeafter, 
    title=Training data format., 
    colframe=darkgray, 
    colback=white, 
    breakable]
$[$\textbf{Question}$]$\\
Evaluate: $64^2-36^2$.\\
$[$\textbf{Common Prefix}$]$\\
Step 1: Recognize that the given expression is of the form $a^2 - b^2$, which can be factorized using the difference of squares method.\\
Step 2: Apply the difference of squares formula, which is $a^2 - b^2 = (a + b)(a - b)$.\\
Step 3: Identify a and b in the given expression. Here, $a = 64$ and $b = 36$.\\
Step 4: Evaluate the sum and difference of a and b.\\
a + b = 64 + 36\\
a - b = 64 - 36\\
Step 5: Multiply the result from step 4 back together.\\
$[$\textbf{Chosen Step}$]$\\
$(64 + 36)(64 - 36)$ \\
Step 6: Perform the addition and subtraction operations.\\
The sum of 64 and 36 is 100. The difference of 64 and 36 is 28. \\
$[$\textbf{Chosen PRM Score}$]$\\
\textcolor{blue}{0.934}\\
$[$\textbf{Rejected Step}$]$\\
Step 6: $(64 + 36) * (64 - 36) = (100) * (28) = 1400$\\
$[$\textbf{Rejected PRM Score}$]$\\
\textcolor{red}{0.258}\\
\end{tcolorbox}

\section{PRM score distribution}\label{prm-distribution}
We use the Skywork-o1-Open-PRM-Qwen-2.5-7B \citep{skywork-prm}model to score 545,990 reasoning trajectories obtained from Qwen2.5-7B-Base \citep{qwen2.5} through Tree-Based Self-Sampling. The score for the $t$-th step of the $i$-th trajectory is denoted as \( v_{t}^{(i)} \). 

First, we calculate three metrics (ORM score, Mean PRM score, and Minium PRM score) on trajectories that produce correct answers and those that result in incorrect answers. If a metric exceeds 0.5, the PRM considers the sample to be a correct trajectory; otherwise, it is deemed an incorrect trajectory. We then compute the PRM accuracy rates under these three metrics, see Table \ref{tab:prm-accuracy}. The experimental results demonstrate that Skywork-o1-Open-PRM-Qwen-2.5-7B exhibits strong discriminative ability for both correct and incorrect trajectories under sampled trajectories. Specifically, the ORM metric shows superior performance in identifying correct trajectories, achieving over 90\% accuracy. In contrast, the minimum PRM score excels in distinguishing incorrect trajectories, reaching an accuracy of 92.5\%. However,  using the mean PRM score, the discriminative ability for correct trajectories is notably higher than for incorrect trajectories. This is because Skywork-o1-Open-PRM-Qwen-2.5-7B  can effectively identify erroneous steps, resulting in high scores (close to 1) before these steps occur, which renders the mean PRM score ineffective for judging incorrect trajectories. Conversely, the minimum PRM score identifies the lower bound of trajectory scoring, making it the most suitable metric for evaluating incorrect trajectories.

\begin{table}[!ht]
    \centering
    \resizebox{1.0\linewidth}{!}{
    \begin{tabular}{c|c|ccc}\toprule
        Metric & \# &ORM & Mean PRM & Minium PRM \\ \midrule
        Correct & 281,983 & 0.908 & 0.920 &  0.705\\ \hline
        Incorrect & 264,007 & 0.870& 0.696 &  0.925\\ \bottomrule
    \end{tabular}
    }
    \caption{Skywork-o1-Open-PRM-Qwen-2.5-7B accuracy.}
    \label{tab:prm-accuracy}
\end{table}

Meanwhile, we divide each trajectory into five equal segments, calculate the average score for each segment, and plot the score distribution in box plots categorized by correct and wrong trajectories, as shown in the Figure \ref{fig:prm-distribution}. The figure indicates that for correct trajectories, PRM assigns relatively high scores to all steps with smaller variance; for wrong trajectories, the segment scores given by PRM tend to decrease on average as they get closer to the answer, with the variance also decreasing, suggesting that PRM's confidence in the wrong trajectory leading to an incorrect answer increases.
\begin{figure}[!ht]
    \centering
    \includegraphics[width=1\linewidth]{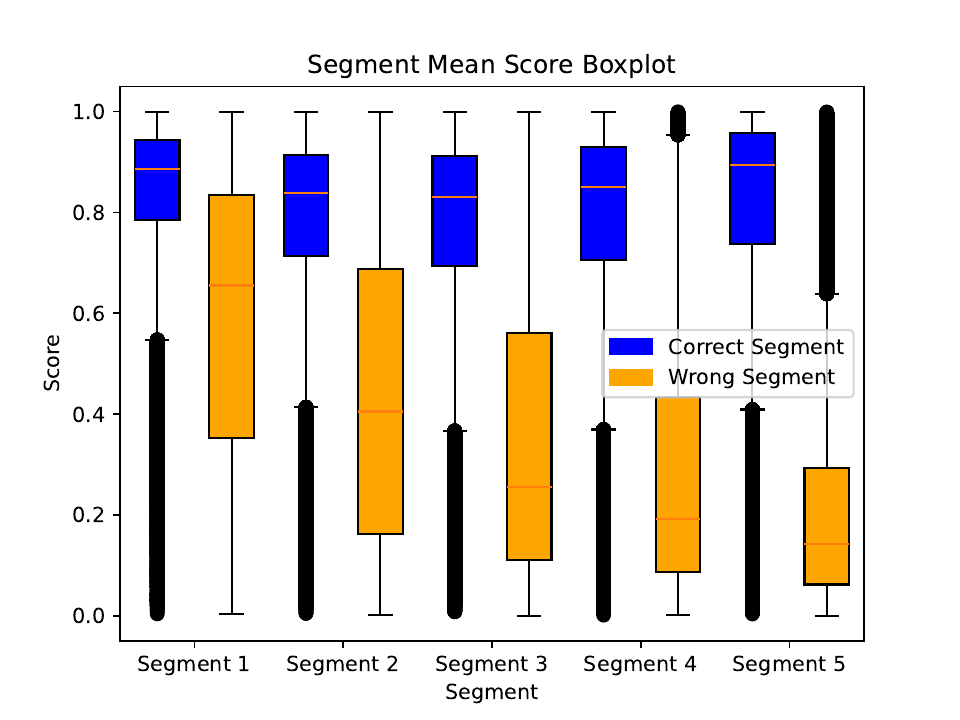}
    \caption{ Skywork-o1-Open-PRM-Qwen-2.5-7B distribution.}
    \label{fig:prm-distribution}
\end{figure}

\section{Evaluation}\label{app:evaluation}
\subsection{Evaluation Prompts}

For a fair evaluation, the same prompt and format is applied to our trained models as well as other open-source models: 
\begin{tcolorbox}[nobeforeafter, 
    title=Prompt used for evaluation., 
    colframe=darkgray, 
    colback=white, 
    breakable]
$[$SYSTEM$]$\\
Please reason step by step and put your answer in $\backslash\backslash$boxed$\{\}$. \\
$[$Question$]$\\
\{question\}.\\
\end{tcolorbox}

\section{Proofs}
\subsection{Proof of Lemma (\ref{lemma:reward-function})}\label{app:prove_for_reward_function}
\begin{lemma}[Optimal Step Reward Function]
    Under the step MDP definition\ref{sec:step-dpo-mdp} and fix solution for the maximum casual entropy problem (Equation (\ref{equ:fix_solution})), the optimal step reward function can be calculate as follow:
    \begin{align}
    r(s_t, a_t) = \underbrace{\beta \log \frac{\pi^*(a_t | s_t)}{\pi_{\text{ref}}(a_t | s_t)}}_{\text{Implicit Reward}} + \underbrace{V^*(s_{t+1}) - V^*(s_t)}_{\text{Value Gain}}.
\end{align}
\end{lemma}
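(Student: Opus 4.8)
The plan is to read the claim directly off the fixed-point characterisation in Equation (\ref{equ:fix_solution}) together with the soft Bellman optimality relation linking $Q^*$, $r$, and $V^*$. First I would take logarithms of Equation (\ref{equ:fix_solution}): since $\pi^*(a_t\mid s_t) = \pi_{\text{ref}}(a_t\mid s_t)\,e^{(Q^*(s_t,a_t)-V^*(s_t))/\beta}$, a direct rearrangement gives $Q^*(s_t,a_t) - V^*(s_t) = \beta\log\frac{\pi^*(a_t\mid s_t)}{\pi_{\text{ref}}(a_t\mid s_t)}$. This already isolates the $\beta\log(\pi^*/\pi_{\text{ref}})$ factor that appears as the \textbf{Implicit Reward} in the statement; what remains is to convert the leftover $Q^*(s_t,a_t)$ into $r$ plus a value term.

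Next I would invoke the Bellman backup for the entropy-regularised objective of Equation (\ref{equ:rlhf-step-mdp}). The key structural fact is that the step-MDP transition is \emph{deterministic}: $f(s_t,a_t) = s_t\mid a_t = s_{t+1}$, so the soft Bellman optimality equation collapses from an expectation over next states to a single term, $Q^*(s_t,a_t) = r(s_t,a_t) + V^*(s_{t+1})$ (with unit discount, matching the undiscounted sum in the objective). I would also check that the ``partition function'' reading of $V^*$ is consistent with this backup: imposing $\sum_{a}\pi^*(a\mid s_t)=1$ on Equation (\ref{equ:fix_solution}) forces $V^*(s_t) = \beta\log\sum_{a}\pi_{\text{ref}}(a\mid s_t)\,e^{Q^*(s_t,a)/\beta}$, i.e.\ $V^*$ is precisely the soft value / log-partition normaliser that enters the Bellman relation, so the two occurrences of $V^*$ refer to the same object.

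Finally I would substitute the Bellman relation into the rearranged fixed point. Replacing $Q^*(s_t,a_t)$ by $r(s_t,a_t)+V^*(s_{t+1})$ inside $Q^*(s_t,a_t)-V^*(s_t)=\beta\log\frac{\pi^*(a_t\mid s_t)}{\pi_{\text{ref}}(a_t\mid s_t)}$ and solving for $r(s_t,a_t)$ yields the implicit-reward term plus the difference of optimal values at the two endpoints of the transition — the claimed \textbf{Value Gain}. The main obstacle I anticipate is the careful bookkeeping of this value-gain term: one must pin down its sign and ordering coming out of the soft Bellman backup, and verify consistency with how the term is later used, namely that the shared $V^*(s_t)$ telescopes away when the reward difference $r(s_t,a^w_{t+1})-r(s_t,a^l_{t+1})$ is formed for the preference loss in Theorem \ref{theorem:step-dpo-loss}. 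Everything else is elementary algebra once the deterministic soft Bellman identity and the partition-function interpretation of $V^*$ are in hand.
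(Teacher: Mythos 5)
Your proposal matches the paper's own proof essentially step for step: you log-linearize the fixed-point solution of Equation (\ref{equ:fix_solution}) to get $Q^*(s_t,a_t)-V^*(s_t)=\beta\log\frac{\pi^*(a_t\mid s_t)}{\pi_{\text{ref}}(a_t\mid s_t)}$, combine it with the (deterministic) Bellman optimality equation $Q^*(s_t,a_t)=r(s_t,a_t)+V^*(f(s_t,a_t))$, and solve for $r(s_t,a_t)$, which is exactly the argument in Appendix \ref{app:prove_for_reward_function}. Your additional checks (the log-partition characterization of $V^*$ and the deterministic-transition remark) are consistent elaborations rather than a different route.
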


\begin{proof}
    According to the Bellman optimality equation \citep{bellman-equation} in step MDP, we have:
    \begin{align}
    Q^*(s_t, a_t) = r(s_t, a_t) + V^*(f(s_t, a_t)).
    \label{equ:optimal-bellman}
    \end{align}
    Here, if \(s_{t+1} = f(s_t, a_t)\) is a terminal state, then \(V^*(f(s_t, a_t)) = 0\). Meanwhile, if we log-linearize the Equation (\ref{equ:fix_solution}), we have:
    \begin{align}
        Q^*(s_t, a_t) = \beta \log \frac{\pi^*(a_t | s_t)}{\pi_{\text{ref}}(a_t | s_t)} + V^*(s_t).
        \label{equ: other-fix-solution}
    \end{align}
    Therefore, combine the Equation (\ref{equ:optimal-bellman}) \& (\ref{equ: other-fix-solution}), we have:
    \begin{align*}
    r(s_t, a_t) = \underbrace{\beta \log \frac{\pi^*(a_t | s_t)}{\pi_{\text{ref}}(a_t | s_t)}}_{\text{Implicit Reward}} + \underbrace{V^*(s_{t+1}) - V^*(s_t)}_{\text{Value Gain}}.
    \end{align*}
\end{proof}
\subsection{Proof of Theorem \ref{theorem:step-dpo-loss}}\label{app:prove_steo_dpo_loss}
\begin{theorem}[Step DPO  Loss Using  Dynamic Value Margin.]\label{theorem:step-dpo-loss}
   If we aim to minimize the Kullback-Leibler(KL) divergence between the step-level preference distribution \(p_{\text{data}}\) in \(\mathcal{D}_{\text{step}}\) and the model's current preference distribution \(p_{\theta}\) under the sampling of \(\pi_{ref}\), we can obtain the following loss function:
\begin{align*}
    \mathcal{L}_{\text{step-dpo}} &=  -\mathbb{E}_{a^w_{t+1}, a^l_{t+1} \sim \pi_{\text{ref}}(\cdot | s_t)} [ \nonumber \\
   & \log \sigma (\beta h_{\theta}(a^w_{t+1}, a^l_{t+1}) \nonumber \\
    & - (V^*(s^w_{t+1}) - V^*(s^l_{t+1}))) ],
\end{align*}
where $h_{\theta}(a^w_{t+1}, a^l_{t+1}) =\log \frac{\pi_{\theta}(a_t^w | s_t)}{\pi_{\text{ref}}(a_t^w | s_t)} - \log \frac{\pi_{\theta}(a_t^l | s_t)}{\pi_{\text{ref}}(a_t^l | s_t)}$. 
\end{theorem}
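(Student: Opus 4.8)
The plan is to follow the maximum-likelihood route familiar from vanilla DPO, but carried out on the step-level BT model. First I would use Lemma~\ref{lemma:reward-function} to define a \emph{parameterized} step reward by substituting the trainable policy $\pi_\theta$ for the optimal $\pi^*$ in the implicit-reward term, so that $r_\theta(s_t,a)=\beta\log\frac{\pi_\theta(a\mid s_t)}{\pi_{\text{ref}}(a\mid s_t)}+\big(V^*(s_t)-V^*(f(s_t,a))\big)$; the model's step-level preference distribution is then $p_\theta(a^w_{t+1}\succ a^l_{t+1})=\sigma\!\big(r_\theta(s_t,a^w_{t+1})-r_\theta(s_t,a^l_{t+1})\big)$, exactly as the BT model stated just before the theorem prescribes.

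Second, I would convert the stated KL objective into a cross-entropy. Writing $D_{\mathrm{KL}}(p_{\text{data}}\,\|\,p_\theta)=-\mathcal{H}(p_{\text{data}})-\mathbb{E}_{p_{\text{data}}}[\log p_\theta]$ and observing that the entropy of the data distribution carries no $\theta$ dependence, minimizing the divergence reduces to minimizing the negative log-likelihood $-\mathbb{E}[\log p_\theta(a^w_{t+1}\succ a^l_{t+1})]$, with the competing actions drawn from $\pi_{\text{ref}}(\cdot\mid s_t)$ as in the theorem. For the hard preference labels produced by the tree sampling this is simply the log-likelihood of the observed winner.

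Third comes the key algebraic step: forming the reward difference $r_\theta(s_t,a^w_{t+1})-r_\theta(s_t,a^l_{t+1})$. Because both candidate actions branch from the \emph{same} parent state $s_t$, the potential term $V^*(s_t)$ appears with opposite signs and cancels — this is the MDP analogue of the partition-function cancellation that makes vanilla DPO tractable. What survives is the implicit-reward difference $\beta h_\theta(a^w_{t+1},a^l_{t+1})$ together with the difference of the two child-state values $V^*(s^w_{t+1})-V^*(s^l_{t+1})$, which is precisely the dynamic value margin. Substituting this difference into the negative log-likelihood yields $\mathcal{L}_{\text{step-dpo}}$ as stated.

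I expect the main obstacle to be pinning down the correct parameterization of $r_\theta$ and, with it, the sign with which the child-state value margin enters the sigmoid. The cancellation of $V^*(s_t)$ is clean, but whether the surviving margin appears as $+(V^*(s^w_{t+1})-V^*(s^l_{t+1}))$ or $-(V^*(s^w_{t+1})-V^*(s^l_{t+1}))$ depends on how the value gain from Lemma~\ref{lemma:reward-function} is attached to the trainable implicit reward; this must be made consistent with the preference-decoding reward $r^w_{\theta,t}$ used later in Section~\ref{sec:theoretical}, and is the one place where care is required beyond routine manipulation.
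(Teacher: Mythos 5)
Your proposal is correct and follows essentially the same route as the paper's proof: the paper likewise obtains $p_\theta(a^w_{t+1}\succ a^l_{t+1}\mid s_t)=\sigma\bigl(\beta h_\theta(a^w_{t+1},a^l_{t+1})-(V^*(s^w_{t+1})-V^*(s^l_{t+1}))\bigr)$ from the policy-parameterized step reward (the parent term $V^*(s_t)$ cancelling exactly as you say), and then reduces the $\pi_{\text{ref}}$-sampled KL objective, with hard preference labels, to the negative log-likelihood of the observed winner. Your closing concern about the sign is well founded --- the ``value gain'' as printed in Lemma~\ref{lemma:reward-function} is in fact inconsistent with the Bellman algebra of the paper's own appendix derivation --- and your parameterization $r_\theta(s_t,a)=\beta\log\frac{\pi_\theta(a\mid s_t)}{\pi_{\text{ref}}(a\mid s_t)}+V^*(s_t)-V^*(f(s_t,a))$ is precisely the one consistent with both the stated loss and the preference-decoding rewards of Section~\ref{sec:theoretical}.
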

\begin{proof}
    According to the Equation (\ref{equ:reward}), we have:
    \begin{align*}
        & p_{\theta}(a^w_{t+1}\succ a^l_{t+1}| s_t) \\
        &= \sigma (\beta h_{\theta}(a^w_{t+1}, a^l_{t+1}) \nonumber  - (V^*(s^w_{t+1}) - V^*(s^l_{t+1})))
    \end{align*}
    So the KL divergence between $p_{\theta}$ and $p_{data}$ under the sampling of \(\pi_{ref}\) is:
    \begin{align*}
        &\mathbb{E}_{a^w_{t+1},a^l_{t+1}\sim \pi_{ref}(\cdot|s_t)}[D_{KL}(p_{data}||p_{\theta}) ]\\
        &= \mathbb{E}_{a^w_{t+1},a^l_{t+1}\sim \pi_{ref}(\cdot|s_t)}[\\&
        p_{data}(a^w_{t+1}\succ a^l_{t+1}| s_t)\log\frac{p_{data}(a^w_{t+1}\succ a^l_{t+1}| s_t)}{p_{\theta}(a^w_{t+1}\succ a^l_{t+1}| s_t)}\\&
       +  p_{data}(a^l_{t+1}\succ a^w_{t+1}| s_t)\log\frac{p_{data}(a^l_{t+1}\succ a^w_{t+1}| s_t)}{p_{\theta}(a^l_{t+1}\succ a^w_{t+1}| s_t)}] \\
   & =-\mathbb{E}_{a^w_{t+1}, a^l_{t+1} \sim \pi_{\text{ref}}(\cdot | s_t)} [\log p_{\theta}(a^w_{t+1}\succ a^l_{t+1}| s_t)],
    \end{align*} which is the same as Equation (\ref{equ:step-dpo-loss}).

\end{proof}

\subsection{Proof of Theorem \ref{theorem:equivalence}} \label{app:theorem:equivalence}
\begin{theorem}[Equivalence Between Offline Step DPO and Online Policy Gradient]
 If we define the reward in Equation (\ref{equ:online-gd}) as \(r(\tau) = \prod_{i=1}^T \frac{\pi_{\text{ref}}(a_t | s_t)}{\pi^p_\theta(a_t | s_t)}\), and define the \textbf{Offline every-step preference loss} as:
\begin{align*}
    \mathcal{L}_{\text{every-step}} = \\ 
    &\mathbb{E}_{\tau \sim \pi^p_{\text{ref}}} \left[- \sum_{t=0}^{T-1}  \log \pi^p_\theta(a_{t+1}^w | s_t) \right],\nonumber
\end{align*}
then the following equivalence holds:
\begin{align*}
    \nabla_{\theta} J(\theta) = -\nabla_{\theta} \mathcal{L}_{\text{every-step}}.
\end{align*}
\end{theorem}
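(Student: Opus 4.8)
The plan is to read the prescribed reward as a trajectory-level likelihood ratio and then let the policy gradient's own sampling measure cancel it. First I would record the factorization of the preference-decoding trajectory probability, $\pi^p_\theta(\tau) = \prod_{t=0}^{T-1} \pi^p_\theta(a^w_{t+1}\mid s_t)$, which follows directly from the definition of $\pi^p_\theta$ as a sequence of per-step preference choices along the binary tree. With this factorization the reward rewrites as $r(\tau) = \pi^p_{\text{ref}}(\tau)/\pi^p_\theta(\tau)$, i.e. the ratio of the reference preference model's trajectory probability to that of the current preference model (I read the numerator of the stated $r(\tau)$ as the reference preference model $\pi^p_{\text{ref}}$, which is what makes the factors align with $\pi^p_\theta$ in the denominator).

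Next I would start from the online policy-gradient identity (Equation (\ref{equ:online-gd})), taken as given, and substitute this reward. Writing the expectation as an explicit sum over trajectories, $\mathbb{E}_{\tau\sim\pi^p_\theta}[\cdot]=\sum_\tau \pi^p_\theta(\tau)(\cdot)$, the factor $\pi^p_\theta(\tau)$ coming from the sampling measure cancels against the denominator of $r(\tau)$, leaving $\sum_\tau \pi^p_{\text{ref}}(\tau)\sum_{t}\nabla_\theta\log\pi^p_\theta(a^w_{t+1}\mid s_t)$. This is exactly $\mathbb{E}_{\tau\sim\pi^p_{\text{ref}}}[\sum_t \nabla_\theta\log\pi^p_\theta(a^w_{t+1}\mid s_t)]$, so the importance weight performs precisely the change of measure from $\pi^p_\theta$ to $\pi^p_{\text{ref}}$.

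For the other side I would differentiate $\mathcal{L}_{\text{every-step}}$ directly. Because its expectation is taken under $\pi^p_{\text{ref}}$, which carries no dependence on $\theta$, the gradient moves inside the expectation and the summation and yields $-\nabla_\theta\mathcal{L}_{\text{every-step}}=\mathbb{E}_{\tau\sim\pi^p_{\text{ref}}}[\sum_t\nabla_\theta\log\pi^p_\theta(a^w_{t+1}\mid s_t)]$. Matching this against the expression obtained for $\nabla_\theta J(\theta)$ gives the claimed equivalence termwise, completing the argument.

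The main subtlety, and the step I would state most carefully, is the stop-gradient convention on the reward. Although $r(\tau)$ is literally a function of $\pi^p_\theta$, in the policy-gradient identity it plays the role of an environment-supplied return and is not differentiated; only the score factor $\nabla_\theta\log\pi^p_\theta$ is. Indeed, differentiating $r(\tau)$ as well would collapse $J(\theta)=\sum_\tau \pi^p_{\text{ref}}(\tau)$ to a constant with zero gradient, so the entire cancellation and hence the equivalence depends on treating $r(\tau)$ as detached from $\theta$ at the current iterate. I would also flag the structural assumptions that justify the factorization, namely the binary branching ($C=2$) and the per-step preference sampling defining $\pi^p_\theta$, so that $\pi^p_\theta(\tau)$ genuinely splits into the per-step winning-action probabilities.
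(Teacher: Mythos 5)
Your proof is correct and is essentially the paper's own argument run in reverse: the paper starts from $\nabla_\theta \mathcal{L}_{\text{every-step}}$, applies the importance-sampling change of measure from $\pi^p_{\text{ref}}$ to $\pi^p_\theta$, and factorizes the resulting ratio $\pi^p_{\text{ref}}(\tau)/\pi^p_\theta(\tau)$ to recognize it as $r(\tau)$, whereas you start from $\nabla_\theta J(\theta)$ and cancel the sampling measure against the denominator of $r(\tau)$ --- the same change-of-measure algebra in the opposite direction. The two subtleties you flag (reading the numerator of $r(\tau)$ as the preference model $\pi^p_{\text{ref}}$ rather than the raw $\pi_{\text{ref}}$, and treating $r(\tau)$ as detached from $\theta$ inside the policy-gradient identity, without which $J(\theta)$ would be identically constant) are precisely the conventions the paper's proof adopts implicitly without comment.
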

\begin{proof}
    \begin{align*}
&\nabla_{\theta}\mathcal{L}_{every-step} \\
&=\mathbb{E}_{\tau\sim \pi^p_{ref}}[-\sum_{t=0}^{T-1}\nabla_{\theta}\log\pi^p_{\theta}(a_{t+1}^w|s_t))]\\
&= \mathbb{E}_{\tau\sim \pi^p_{\theta}}[-\frac{\pi_{ref}^p(\tau)}{\pi_{\theta}^p(\tau)}\sum_{t=0}^{T-1}\nabla_{\theta}\log\pi^p_{\theta}(a_{t+1}^w|s_t))]\\
&= \mathbb{E}_{\tau\sim \pi^p_{\theta}}[\\ &
-\prod_{i=0}^{T-1}\frac{\pi_{ref}^p(a_{t+1}|s_t)}{\pi_{\theta}^p(a_{t+1}|s_t)} \sum_{t=0}^{T-1}\nabla_{\theta}\log\pi^p_{\theta}(a_{t+1}^w|s_t))] \\
&=\mathbb{E}_{\tau\sim \pi^p_{\theta}}[-r(\tau)\sum_{t=0}^{T-1}\nabla_{\theta}\log\pi^p_{\theta}(a_{t+1}^w|s_t))] \\
&=-\nabla_{\theta} J(\theta).
\end{align*}

\end{proof}
\end{document}